\newtheorem{theorem}{\bf Theorem}
\newtheorem{lemma}{\bf Lemma}
\newtheorem{definition}{\bf Definition}
\newtheorem{rem}{\bf Remark}
\begin{document}

\title{\LARGE \bf Safe Adaptive Switching among Dynamical Movement Primitives: Application to 3D Limit-Cycle Walkers}

\author{Sushant~Veer and Ioannis~Poulakakis
\thanks{S. Veer and I. Poulakakis are with the Department of Mechanical Engineering, University of Delaware, Newark, DE 19716, USA;  e-mail: {\tt\small \{veer, poulakas\}@udel.edu.}}
\thanks{This work is supported in part by NSF CAREER Award IIS-1350721 and by NRI-1327614.}
}

\maketitle

\begin{abstract}

Complex motions for robots are frequently generated by switching among a collection of individual movement primitives. We use this approach to formulate robot motion plans as sequences of primitives to be executed one after the other. When dealing with \emph{dynamical} movement primitives, besides accomplishing the high-level objective, planners must also reason about the effect of the plan's execution on the safety of the platform. This task becomes more daunting in the presence of disturbances, such as external forces. To alleviate this issue, we present a framework that builds on rigorous control-theoretic tools to generate safely-executable motion plans for externally excited robotic systems. Our framework is illustrated on a 3D limit-cycle gait bipedal robot that adapts its walking pattern to persistent external forcing.
\end{abstract}

\IEEEpeerreviewmaketitle

\section{Introduction}
\label{sec:intro}

Robots operating in the real world are expected to encounter a wide range of exogenous input signals due to contact or other types of interaction with a possibly time-varying, stochastic environment. Depending on the task, external signals may represent commands that need to be followed or disturbances that must be attenuated. A diverse collection of suitable primitive motions, and the capability to \emph{switch} among them, can provide a sufficiently rich repertoire of behaviors for adapting to or compensating for such signals. For example, in a human-robot physical collaboration scenario~\cite{motahar2015impedance, motahar2017steering}, a force exerted by a human with the intention of decelerating the robot can be followed by switching to a primitive of lower velocity. On the other hand, an undesirable force that pushes against a robot, which is tasked to maintain a constant desired velocity, can be compensated by switching to a primitive of higher velocity.

Adopting a dynamical systems perspective, one approach to characterizing primitive motions is to represent them as attractors of dynamical systems, termed dynamical movement primitives\footnote{The terms dynamical movement primitives and dynamical motion primitives are interchangeably used in the paper.} (DMPs) in the relevant literature~\cite{hogan2013dynamic,ijspeert2013dynamical,burridge1999sequential}. Adjusting the ``landscape'' of such attractors through coupling terms can realize both discrete and rhythmic motion patterns of high complexity, allowing a robot to perform challenging tasks in its workspace~\cite{ijspeert2013dynamical}. An alternative way to generate sufficiently rich robot behaviors is to consider a discrete collection of suitably selected DMPs, and allow switching among them~\cite{burridge1999sequential,tedrake2010}. 
This paper focuses on the latter, and it leverages recent theoretical developments in~\cite{veer2018ultimate, veer2017poincare} to provide \emph{explicitly computable} sufficient conditions that provably guarantee the safety of the robotic platform as it switches among different DMPs to accomplish a task and compensate for disturbances.

Safe switching among DMPs occupies a significant body of work in the literature of robotic motion planning. Inspired by early work in~\cite{mason1985mechanics}, a tree of positively-invariant Lyapunov funnels was constructed in \cite{burridge1999sequential} to generate controller switching policies that drive a robot to a goal while ensuring safety of the robotic platform. Later work in~\cite{tedrake2010} provided a computationally tractable approach to estimate the Lyapunov funnels construction of~\cite{burridge1999sequential} based on Sum-of-Squares (SoS) programming \cite{parrilo2000structured}. This method and its extensions have been experimentally successful for planning motions of a wheeled robot~\cite{conner2011integrating}, ball-bot~\cite{nagarajan2013integrated}, and a fixed-wing airplane~\cite{majumdar2017funnel}. Other methods address composability of dynamic primitives, capturing the associated constraints in the form of maneuver~\cite{frazzoli2005maneuver} or timed~\cite{bouyer2017timed} automata. None of the aforementioned methods deals with persistent disturbances  except~\cite{majumdar2017funnel}, which though requires knowledge of the disturbed dynamics to ensure safe operation; see \cite[Section~4.3.1]{majumdar2017funnel}. On the contrary, the conditions provided here for safe switching in the presence of persistent disturbances rely only on the zero-disturbance stability properties of the individual DMPs, and thus are \emph{agnostic} to the disturbances.

Our focus in this paper is on rhythmic motions of dynamically-stable robots; typical examples of such systems include walking or running machines~\cite{raibert1985legged} and flapping-wing flying robots~\cite{ramezani2017describing}. Mathematically, such motions can be idealized as attracting limit cycles that capture the fundamental oscillatory behavior of the underlying energy transformations. Restricting attention to legged robots, a variety of methods have been proposed to stabilize limit-cycle locomotion behaviors, including  hybrid zero dynamics (HZD)~\cite{westervelt2007feedback}, geometric control reduction~\cite{Gregg2010Geometric}, and virtual holonomic constraints~\cite{freidovich2009passive}, just to name a few. These methods aim at stabilizing \emph{individual} limit cycles that result in locomotion behaviors with certain desired attributes. However, to address the challenges faced by a legged robot moving in a time-varying environment, these individual limit cycles must be composed in response to external stimuli to form more complex motion patterns. 
Enabling such compositions is at the core of the proposed framework, the practical value of which lies on its ability to take such off-the-shelf robot controllers for individual primitive behaviors and switch among them in response to external signals, while affording rigorous safety guarantees.
It should be noted that switching among limit-cycle gaits has been explored in the context of various applications, including navigation in environments cluttered by obstacles~\cite{gregg-planning2012, motahar2016composing, veer2017driftless}, speed adaptation~\cite{veer2017continuum, bhounsule2018switching}, and robustness to disturbances~\cite{saglam2013switching, quan2016dynamic, da2016first}.
With the exception of~\cite{saglam2013switching} which provides stochastic stability guarantees, the switching policies discussed in the aforementioned methods do not account for persistent external excitation.

In this paper, building upon our recent work \cite{veer2018ultimate}, we present a general framework that formulates switching among externally excited DMPs as a switched system with multiple equilibria under disturbances. However, in \cite{veer2018ultimate}, we assumed global stability properties for the individual systems and demanded knowledge of the external signal's effect on them---either of which rarely hold true for robotic systems. The method we provide here, remedies these limitations by allowing us to furnish safety guarantees for switching under disturbances by studying an \emph{unperturbed} switched system (Theorem~\ref{thm:dist-agnostic}). The framework is demonstrated for locomotion adaptation of a 3D bipedal walker that collaborates with a leader by switching among limit-cycle primitives. 


\begin{figure}[b]
\vspace{-0.18in}
\begin{centering}
\includegraphics[width=0.95\columnwidth]{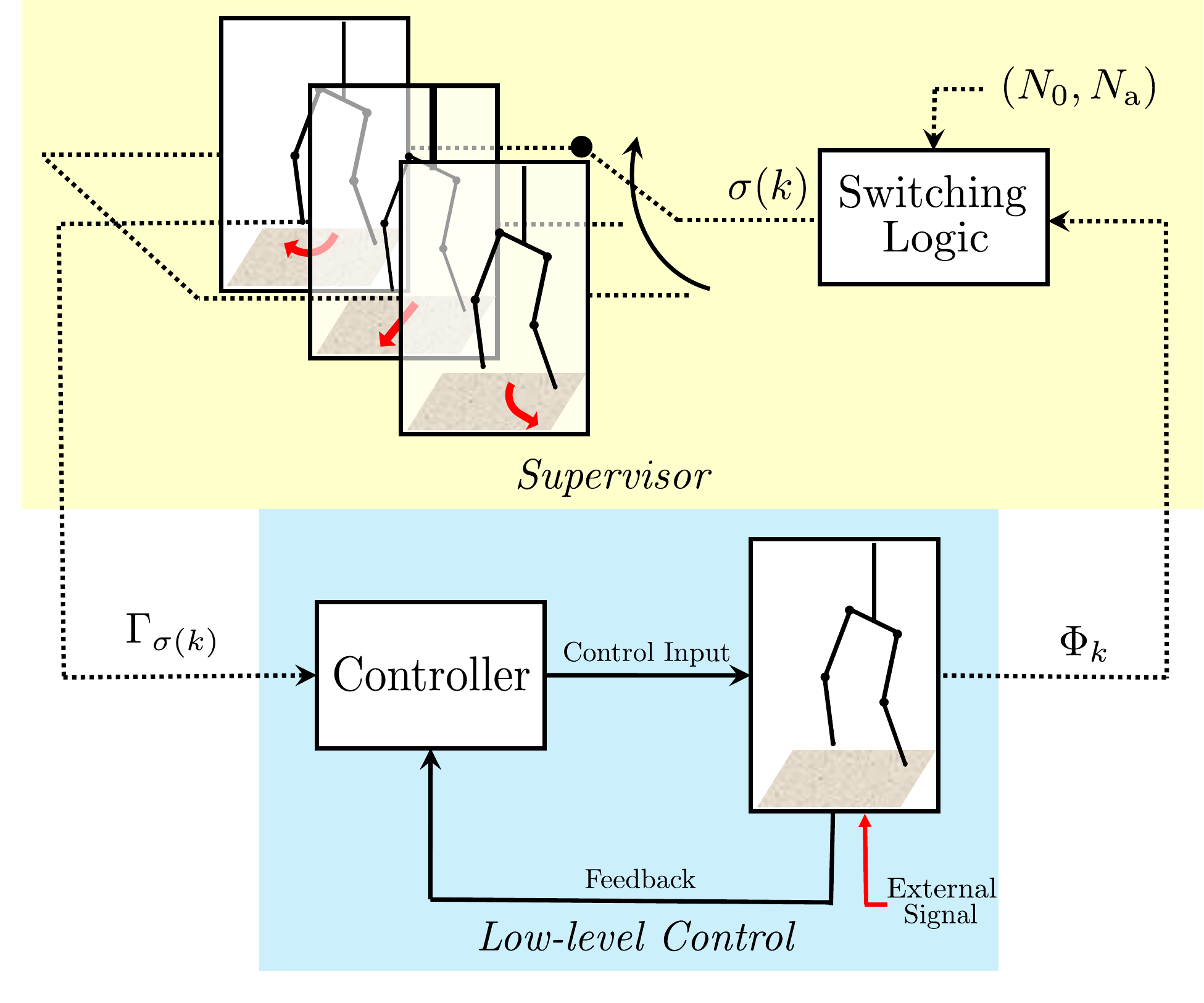} 
\par\end{centering}
\vspace{-0.2in}
\caption{Adaptive planning block diagram. The high-level supervisor is shown in yellow and the low-level control loop is shown in blue.}
\label{fig:block-diagram} 
\end{figure}

\section{Overview: Switching Framework}
\label{sec:overview}

In this section we provide an overview of our motion planning framework; technical details are relegated for later sections. Our planning framework is organized in two hierarchical levels as shown in Fig.~\ref{fig:block-diagram}. On the high level, there is a supervisor that comprises a library of primitives and a switching logic that governs the choice of the primitive to be implemented on the robot. On the low level, there is a feedback control loop which executes the controller of the primitive chosen by the supervisor. It is the supervisor's role to orchestrate switching among members of the primitive library in accordance with a prescribed high-level objective---such as navigate to a goal while avoiding obstacles \cite{motahar2016composing} or adapt to a collaborating leader \cite{veer2017supervisory}. However, due to the dynamics of the system and possible external excitation, the generated sequence of primitive switches may lead to instability, requiring the switching logic to reason about the dynamic limitations of the robot; to address these limitations, we consider the effect of switching among primitives on the robot dynamics.

We work with primitives that take the form of dynamical systems with equilibrium behaviors---equilibrium points or limit cycles. Hence, executing a motion plan naturally induces a switched system structure to the dynamics of the robot. In Section~\ref{sec:switch-guarantees}, we analyze the switched system that arises and identify a class of switching signals that can be safely executed by the robot despite disturbances. This class of switching signals is characterized by a lower bound on the average time-gap between any two consecutive switches---formally known as average dwell-time. Informing the supervisor about this constraint allows the switching logic to generate motion plans that can be safely executed. 

This framework can be used for adaptive planning in various scenarios, such as navigation of a robot in the presence of disturbances \cite{majumdar2017funnel}, adapting to rough terrain \cite{quan2016dynamic}, and teleoperation \cite{yang2018online}, to name a few. In this paper, we apply the framework to a task that involves physical cooperation between a bipedal robot and a leading collaborator---a human or a robot. The success of such a task hinges on the ability of the biped to adapt its walking pattern in response to the leader's intended trajectory, which is not explicitly available but is indirectly known through an interaction force applied by the leader. Hence, the interaction force serves as a command signal that the biped must adapt to. To achieve this, we supply the supervisor with a library of limit-cycle primitives---straight walking, turn right, and turn left---as shown in Fig.~\ref{fig:block-diagram}. The supervisor is provided with an appropriate input $\Phi_k$ that encodes the leader's intention through the interaction force, and an average dwell-time constraint, which it uses to \emph{safely} adapt the biped's motion to the leader's intended trajectory. Section~\ref{sec:example} of the paper will discuss this scenario in detail.



\section{Safe Switching under Disturbances}
\label{sec:switch-guarantees}

This section formalizes the concepts that underlie the framework briefly outlined above; see Fig~\ref{fig:block-diagram}. 


\subsection{Library of Motion Primitives}
\label{subsec:lib-prim}
 

In the proposed approach, motion primitives are characterized by point attractors of continuous- or discrete-time dynamical systems together with the vector fields capturing the corresponding dynamic behavior. Limit cycle primitives---such as those frequently employed in rhythmic behaviors---can be incorporated in our approach naturally, as they can be associated with point attractors of suitably constructed discrete-time systems via the method of Poincar\'e~\cite{GUHO96}. Hence, in what follows we will develop the method in the context of discrete-time systems, noting that analogous results hold for continuous-time systems as well~\cite{veer2018ultimate}. 

Let $\mathcal{P}$ be a finite index set and consider a collection of discrete-time nonlinear systems 
\begin{align}\label{eq:discrete_system}
x_{k+1} = f_p(x_k) \enspace, & & p\in\mathcal{P} \enspace,
\end{align}
where $x$ represents the state vector evolving in a space $\mathcal{X}_p\subseteq \mathbb{R}^n$, $f_p$ is a vector field describing the dynamics of the system and $k$ denotes discrete time. We can now formalize our notion of motion primitives, which are defined as two-tuples consisting of a vector field $f_p$  and a corresponding equilibrium (fixed) point $x_p^*$ satisfying $x_p^*=f_p(x_p^*)$; that is,
\begin{align}\label{eq:primitives}
\mathcal{G}_p:=\{f_p,x_p^*\} \enspace, & & p\in\mathcal{P} \enspace.
\end{align}
The totality $\mathbb{G} := \{ \mathcal{G}_p ~|~ p \in \mathcal{P} \}$ of the motion primitives \eqref{eq:primitives} defines a library of motion primitives.  

To ensure safe operation, $\mathbb{G}$ will only include motion primitives that correspond to exponentially stable fixed points. Safety certificates of this form can be obtained through the notion of a  Lyapunov function~\cite{khalil2002nonlinear}; i.e., a positive definite, radially unbounded, decrescent function  $V_p$ that satisfies  
\begin{align}
V_p(f_p(x)) & \leq \lambda V_p(x) \label{eq:V-2-0-inp}\enspace,
\end{align}
where\footnote{For simplicity, we assume that the convergence rate $\lambda$ is the same for all primitives; if this is not the case, we can always choose $\lambda:=\max_{p\in\mathcal{P}} \lambda_p$.} $0<\lambda<1$. 
To implement switching policies among motion primitives that afford rigorous safety guarantees, we will also need an estimate of the basin-of-attraction (BoA) $\mathcal{B}_p$ associated with each $\mathcal{G}_p$ in $\mathbb{G}$. To conveniently characterize such estimates, we use sublevel sets of Lyapunov functions verified through sums-of-squares (SoS) programming~\cite{parrilo2000structured}. The dimensional reduction afforded by the controllers employed in Section~\ref{sec:example} greatly improves computational efficiency in obtaining such estimates.     

\begin{rem}
The motion primitives defined by \eqref{eq:primitives} can be employed directly to plan or adapt rhythmic behaviors to external signals; a concrete example is provided in Section~\ref{sec:example}. Indeed, such tasks can be facilitated via switching among limit cycle motion primitives $\{ \mathcal{O}_p ~|~ p \in \mathcal{P} \}$. By the method of Poincar\'e, however, each limit cycle $\mathcal{O}_p$ can be naturally associated with a  fixed point $x_p^*$ of a discrete-time system \eqref{eq:discrete_system} with $f_p$ being the corresponding Poincar\'e map~\cite{GUHO96}. 
\end{rem}

\subsection{Conditions for Safe Switching}
\label{subsec:switch-system}

A motion planner---or, a supervisor---is responsible for monitoring the state of the system as it interacts with its environment and deciding which motion primitive $\mathcal{G}_p$ out of the library $\mathbb{G}$ must be implemented at each time instant. This decision can be represented as a ``descending'' switching signal $\sigma:\mathbb{Z}_+\to\mathcal{P}$, which maps the current time $k$ to the index $p=\sigma(k)\in\mathcal{P}$ of the member $\mathcal{G}_{\sigma(k)}$ of $\mathbb{G}$ that must be executed at $k$. The process gives rise to a discrete switched system with multiple equilibria that has the form
\begin{equation}\label{eq:switched-system}
x_{k+1} = f_{\sigma(k)}(x_k) \enspace.
\end{equation}

Quantifying safety for systems, the behavior of which is governed by a switched system like \eqref{eq:switched-system}, can be challenging; yet, such systems emerge in a wide range of applications where a supervisor chooses among different controllers, each being suited for a particular mode of operation. Switching in \eqref{eq:switched-system} effectively causes the system to ``shift'' to a different point attractor, and thus persistent switching in response to constantly varying environmental or task conditions causes the system to be in a ``permanent'' transient phase, \emph{never} converging to any of the underlying equilibrium states. The resulting evolution can be highly irregular.  

In this work, we will adopt the following set-characterization for safety. The switched system \eqref{eq:switched-system} will be considered safe, if the following condition holds 
\begin{align}\label{eq:safety_criterion}
\big\{ x^*_p \in \bigcap_{p\in\mathcal{P}} \mathcal{B}_p \text{~~and~~}
x_k \in \bigcap_{p\in\mathcal{P}} \mathcal{B}_p \text{~for all~} k \in \mathbb{Z}_+ \big\}\enspace,
\end{align}
where $\mathcal{B}_p$ is an estimate of the BoA associated with the primitive $\mathcal{G}_p$. In words, this condition implies that the state of \eqref{eq:safety_criterion} always remain trapped in a compact subset of the state space, which is explicitly characterized as the intersection of the estimates of the BoAs of all the motion primitives and includes all the equilibrium points. Our motivation for adopting this criterion is twofold. First, it implies that the state remains bounded. Second, it guarantees that if switching were to cease, the system would return to the equilibrium corresponding to the most recently implemented primitive.

The question we address next is to identify a class of switching signals $\sigma$ which guarantee that the condition \eqref{eq:safety_criterion} is satisfied. Loosely speaking, we will require that switching is sufficiently slow on average. Intuitively, we require that the average time gap between any two consecutive switches is sufficiently long to ensure that the energy dissipated due to the exponentially stable nature of each primitive $\mathcal{G}_p$ dominates the possible energy gain due to a switch, resulting in an overall energy reduction. To make this precise, we will use the notion of the average dwell time, introduced in~\cite{hespanha1999stability}.
\begin{definition}\label{def:avg-dwell}
A switching signal $\sigma:\mathbb{Z}_+\to\mathcal{P}$ has average dwell-time $N_{\rm a}>0$ if the number of switches $N_\sigma(k,\underline{k})$ in any discrete interval $[\underline{k},k)\cap\mathbb{Z}_+$ satisfies
\begin{align}\label{eq:avg-dwell-time-def}
N_\sigma(k,\underline{k}) \leq N_0 + \frac{k-\underline{k}}{N_{\rm a}} \enspace, & & \forall k\geq \underline{k} \geq 0 \enspace,
\end{align}
where $k,\underline{k}\in\mathbb{Z}_+$ and $N_0>0$ is a finite constant.
\end{definition}

Below, we introduce a set construction that will allow us to obtain explicitly computable expressions for an average dwell time constraint, which guarantees that  \eqref{eq:safety_criterion} is satisfied; a rigorous justification can be found in \cite{veer2018ultimate}. 
We begin by choosing a $\kappa>0$ and defining the $\kappa$-sublevel set of $V_p$
\begin{equation}\nonumber
\mathcal{M}_p(\kappa):=\{x\in\mathbb{R}^n~|~V_p(x)\leq \kappa\} \enspace;
\end{equation}
see Fig.~\ref{fig:set-construction}. Let $\mathcal{M}(\kappa):=\bigcup_{p\in\mathcal{P}} \mathcal{M}_p(\kappa)$ denote the union of these subsets over $\mathcal{P}$, and define
\begin{equation}\label{eq:omega-def}
\omega(\kappa):=\max_{p\in\mathcal{P}}\max_{x\in\mathcal{M}(\kappa)} V_p(x) \enspace.
\end{equation}
Then, it can be seen that $ \mathcal{M}(\kappa) \subseteq \bigcap_{p\in\mathcal{P}} \mathcal{M}_p(\omega(\kappa))$ as in Fig.~\ref{fig:set-construction}. Effectively, $\omega$ ``inflates'' the sets $\mathcal{M}_p(\kappa)$ to the sets $\mathcal{M}_p(\omega(\kappa))$, the intersection of which contains $\mathcal{M}(\kappa)$.  

To bound possible ``energy" gain due to switching, let\footnote{Notation: If $S$ is a set, then $\accentset{\circ}{S}$ denotes its interior.}
\begin{equation}\label{eq:mu-def}
\mu(\kappa):=\max_{p,q\in\mathcal{P}} \max_{x\in \mathcal{B}_p\setminus \accentset{\circ}{\mathcal{M}}_p(\kappa)} \frac{V_q(x)}{V_p(x)} \enspace,
\end{equation}
which captures the ratio of all Lyapunov functions and is well-defined since $\mathcal{P}$ is finite and $\mathcal{B}_p\setminus \accentset{\circ}{\mathcal{M}}_p(\kappa)$ is compact; the exclusion of $\accentset{\circ}{\mathcal{M}}_p(\kappa)$ is to prevent the denominator of \eqref{eq:mu-def} from approaching 0, as $V_p(x_p^*)=0$.
Note also that the interchangeability of $p$ and $q$ implies that $\mu(\kappa)\geq 1$.

With $\omega(\kappa)$ and $\mu(\kappa)$ as in \eqref{eq:omega-def} and \eqref{eq:mu-def} and $\lambda$ as in \eqref{eq:V-2-0-inp}, a lower bound on the average dwell-time can be computed as
\begin{equation}\label{eq:avg-dwell}
\overline{N}_{\rm a}= \frac{\ln{\mu(\kappa)}}{\ln{(\epsilon/\lambda)}} \enspace,
\end{equation}
where $\epsilon$ is an arbitrary constant in $(\lambda,1)$. Switching signals with $N_\mathrm{a} \geq \overline{N}_{\rm a}$ guarantee that the evolution of \eqref{eq:switched-system} remains bounded. 
Intuitively, a large $\mu$ implies that the ``energy'' gain due to switching may be large, causing $\overline{N}_{\rm a}$ to increase, so that choosing $N_{\rm a} \geq \overline{N}_{\rm a}$ in \eqref{eq:avg-dwell-time-def} will result to slower switching on average. Similarly, a slow rate of convergence $\lambda$ during the interval between switches will have a similar effect. 

Satisfying $N_\mathrm{a} \geq \overline{N}_{\rm a}$ ensures boundedness of the state, however, according to \eqref{eq:safety_criterion} we also need to ensure that the corresponding bounded ``trapping'' set lies within $\bigcap_{p\in\mathcal{P}}\mathcal{B}_p$. It turns out that, to comply with this requirement, the chosen $\kappa$ must be such that the condition 
\begin{equation}\label{eq:feasible-0-inp}
\overline{\mathcal{M}}:=
\mathcal{M}(\mu(\kappa)^{\overline{N}_0}\omega(\kappa)) \subset \bigcap_{p\in\mathcal{P}} \accentset{\circ}{\mathcal{B}}_p \enspace.
\end{equation}
is verified for some $\overline{N}_0\geq 1$. Then, for any $\sigma$ with 
\begin{equation}
N_0 \leq \overline{N}_0 \text{~~and~~} N_{\rm a}\geq \overline{N}_{\rm a} \enspace,
\end{equation}
the solution of \eqref{eq:switched-system} starting from any $x_0 \in \bigcap_{p\in\mathcal{P}} \mathcal{M}_p(\omega(\kappa))$ will remain within $\bigcap_{p\in\mathcal{P}}\mathcal{B}_p$ for all $k\in\mathbb{Z}_+$. Note also that since $\mathcal{M}$ represents the union of sublevel sets, it includes all the equilibrium points, thereby \eqref{eq:feasible-0-inp} implies that the first part of the condition \eqref{eq:safety_criterion} is also satisfied.

\begin{figure}[t]
\vspace{+0.02in}
\begin{centering}
\includegraphics[width=0.65\columnwidth]{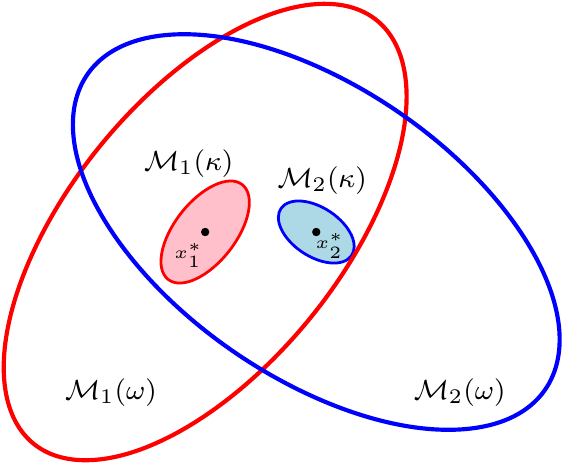} 
\par\end{centering}
\vspace{-0.1in}
\caption{Illustration of the set construction. The sublevel sets for system 1 are in red and the sublevel sets of system 2 are in blue.}
\vspace{-0.2in}
\label{fig:set-construction} 
\end{figure}

\begin{rem}
The rigorous justification of this result---which is of ``local'' nature---is due to Theorem~\ref{thm:dist-agnostic} below, which in fact establishes that the bounds $(\overline{N}_0, \overline{N}_{\rm a})$ remain valid even in the case where the system is perturbed by external disturbances, as long as they are sufficiently small. The proof of this theorem, which is made available in the appendix, also provides a clear motivation for the set constructions described above.         
For the purpose of implementation though, the procedure of obtaining $(\overline{N}_0, \overline{N}_{\rm a})$ can be summarized as follows. First, select a $\kappa>0$ and compute $\omega(\kappa)$ and $\mu(\kappa)$ by \eqref{eq:omega-def} and \eqref{eq:mu-def}, respectively. If \eqref{eq:feasible-0-inp} holds for some $\overline{N}_0 \geq 1$, then safe switching is achieved by providing the supervisor with the numbers $(\overline{N}_0, \overline{N}_{\rm a})$ where $\overline{N}_{\rm a}$ is obtained by \eqref{eq:avg-dwell}. Otherwise, choose a new $\kappa$ and repeat the procedure. 
\end{rem}
\begin{rem}
Note that, for a given $\kappa>0$, obtaining $\omega(\kappa)$ and $\mu(\kappa)$ numerically may be computationally challenging, particularly for high-dimensional systems. However, for quadratic Lyapunov functions---as is frequently the case in practical applications---an upper bound for $\omega$ and $\mu$ can be \emph{analytically} computed using \cite[Proposition~1]{veer2018ultimate}.
\end{rem}
 



\subsection{Conditions for Safe Switching: The case of disturbances}
\label{subsec:switch-system-disturb}

In this section, we state a theorem that rigorously justifies the assertions informally made in the previous section. With a slight abuse of notation, consider the perturbed version 
\begin{align}\label{eq:switch-system-disturb}
x_{k+1} = f_{\sigma(k)}(x_k,d_k) \enspace,
\end{align}
of the switched system \eqref{eq:switched-system}. In \eqref{eq:switch-system-disturb} the disturbance signal is represented as a sequence $d:=\{d_k\}_{k\in\mathbb{Z}_+}$ with values in the set of admissible disturbances\footnote{The disturbance $d_k$ can be a continuous function of time rendering $\mathcal{D}$ the mathematical structure of a Banach space and $f_p$ a nonlinear functional \cite{royden2010real}; indeed this will be the case in Section~\ref{sec:example}. } $\mathcal{D}$. Let $\|\cdot\|_{\mathcal{D}}$ be the norm on $\mathcal{D}$ and define $\|d\|_\infty:=\sup_{k\in\mathbb{Z}_+} \|d_k\|_{\mathcal{D}}$. We further assume that the vector fields $f_p:\mathcal{X}_p\times\mathcal{D}\to\mathcal{X}_p$ are locally Lipschitz.

Then, Theorem~\ref{thm:dist-agnostic} below shows that the switching conditions stated in the absence of disturbances in Section~\ref{subsec:switch-system} guarantee that the switched system \eqref{eq:switch-system-disturb} will be safe according to \eqref{eq:safety_criterion} despite the presence of disturbances. 

\begin{theorem}\label{thm:dist-agnostic}
Consider \eqref{eq:switch-system-disturb} where $\sigma$ is a switching signal and $\sigma(k) = p \in \mathcal{P}$. Assume that
\begin{enumerate}[(i)]
\item for each $p\in\mathcal{P}$, $x^*_p$ is an exponentially stable equilibrium of $f_p$ in the absence of disturbances, let $V_p$ be a locally Lipschitz Lyapunov function and $\mathcal{B}_p$ an estimate of the BoA based on $V_p$;
%
\item there exists a $\kappa>0$, $\overline{N}_0>0$ such that \eqref{eq:feasible-0-inp} holds.
\end{enumerate}
Then, there exists a $\delta>0$ such that for any disturbance $\{d_k\}_{k\in\mathbb{Z}_+}$ with $\|d\|_\infty<\delta$, and for any switching signal $\sigma$ that satisfies Definition~\ref{def:avg-dwell} with $N_0 \leq \overline{N}_0$ and $N_{\rm a}\geq \overline{N}_{\rm a}$ defined in \eqref{eq:avg-dwell}, the solution of \eqref{eq:switch-system-disturb} satisfies
\begin{align}\label{eq:target-set}
x_0 \in \bigcap_{p\in\mathcal{P}} \mathcal{M}_p(\omega(\kappa)) \Rightarrow x_k \in  \mathcal{M}(\bar{\omega}(\|d\|_\infty)) \subset \bigcap_{p\in\mathcal{P}} \mathcal{B}_p \enspace,
\end{align}
for all $k\in\mathbb{Z}_+$, where $\bar{\omega}(\|d\|_\infty):=\mu(\kappa)^{N_0}\omega(\kappa) + \alpha(\|d\|_\infty)$ and\footnote{Terminology: A function $\alpha:\mathbb{R}_+ \to \mathbb{R}_+$ belongs to $\mathcal{K}_\infty$ if it is continuous, strictly increasing, $\alpha(0)=0$, and $\lim_{s\to\infty} \alpha(s)=\infty$. } $\alpha\in\mathcal{K}_\infty$.
\end{theorem}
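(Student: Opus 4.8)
The plan is to establish \eqref{eq:target-set} by a Lyapunov/ISS argument carried out along any admissible switching signal, treating the inter-switch evolution and the switching jumps separately. First I would fix the primitive active on a given interval and convert the exponential decay \eqref{eq:V-2-0-inp} into a perturbed estimate: since each $f_p$ is locally Lipschitz jointly in $(x,d)$ and $V_p$ is locally Lipschitz, on the compact set $\bigcap_{p}\mathcal{B}_p$ there is a constant so that $V_p(f_p(x,d)) \le \lambda V_p(x) + c\,\|d\|_{\mathcal{D}}$ whenever $x\in\bigcap_p\mathcal{B}_p$ and $\|d\|_{\mathcal{D}}$ is small; equivalently, picking $\epsilon\in(\lambda,1)$, one gets $V_p(f_p(x,d)) \le \epsilon V_p(x)$ on the annular region where $V_p(x)$ is bounded below by a multiple of $\|d\|_\infty$, and $V_p(f_p(x,d))\le \mathcal{O}(\|d\|_\infty)$ otherwise. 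This is the only place the disturbance enters, and it is where the class-$\mathcal{K}_\infty$ gain $\alpha$ originates. The key point, inherited from the disturbance-free analysis referenced in \cite{veer2018ultimate}, is that $\epsilon<1$ gives a strict per-step contraction away from a small neighborhood of the origin.

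Next I would account for the jumps. Across a switch from $p$ to $q$ at time $k$, the inequality $V_q(x_k)\le \mu(\kappa)\,V_p(x_k)$ holds by the definition \eqref{eq:mu-def} of $\mu(\kappa)$, provided $x_k$ lies in $\mathcal{B}_p\setminus\accentset{\circ}{\mathcal{M}}_p(\kappa)$ — and if instead $x_k\in\accentset{\circ}{\mathcal{M}}_p(\kappa)$ one controls $V_q(x_k)$ directly by $\omega(\kappa)$ via \eqref{eq:omega-def}. Combining the contraction on intervals with the bounded multiplicative growth at switches, over any window $[\underline{k},k)$ containing $N_\sigma(k,\underline{k})$ switches one obtains a bound of the form $V_{\sigma(k)}(x_k) \le \mu(\kappa)^{N_\sigma(k,\underline{k})}\,\epsilon^{\,k-\underline{k}}\,V_{\sigma(\underline{k})}(x_{\underline{k})}) + (\text{disturbance terms})$. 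Substituting the average dwell-time bound \eqref{eq:avg-dwell-time-def}, the exponent becomes $N_0\ln\mu + (k-\underline{k})\!\left(\tfrac{\ln\mu}{N_{\rm a}} + \ln\epsilon\right)$, and the choice $N_{\rm a}\ge \overline{N}_{\rm a} = \ln\mu/\ln(\epsilon/\lambda)$ in \eqref{eq:avg-dwell} — together with $\lambda<\epsilon$ — is exactly what makes the coefficient of $(k-\underline{k})$ nonpositive. Hence the transient factor is uniformly bounded by $\mu(\kappa)^{N_0}$, and with $x_0\in\bigcap_p\mathcal{M}_p(\omega(\kappa))$ one gets $V_{\sigma(k)}(x_k)\le \mu(\kappa)^{N_0}\omega(\kappa) + \alpha(\|d\|_\infty) =: \bar\omega(\|d\|_\infty)$ for all $k$, for a suitable $\alpha\in\mathcal{K}_\infty$ built from the constant $c$ and the geometric series $\sum \mu^{N_0}\epsilon^j$.

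Finally I would close the set-inclusion: $V_{\sigma(k)}(x_k)\le\bar\omega(\|d\|_\infty)$ means $x_k\in\mathcal{M}_{\sigma(k)}(\bar\omega(\|d\|_\infty))\subseteq\mathcal{M}(\bar\omega(\|d\|_\infty))$, and since $\bar\omega(\|d\|_\infty)\to\mu(\kappa)^{N_0}\omega(\kappa)\le\mu(\kappa)^{\overline{N}_0}\omega(\kappa)$ as $\|d\|_\infty\to 0$, assumption (ii) via \eqref{eq:feasible-0-inp} gives $\overline{\mathcal{M}}\subset\bigcap_p\accentset{\circ}{\mathcal{B}}_p$; by continuity of $\alpha$ there is a $\delta>0$ small enough that $\mathcal{M}(\bar\omega(\|d\|_\infty))\subset\bigcap_p\mathcal{B}_p$ whenever $\|d\|_\infty<\delta$. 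One caveat requiring care is that the perturbed estimate $V_p(f_p(x,d))\le \lambda V_p(x)+c\|d\|$ is only valid while $x$ stays in the compact region where the Lipschitz constants were taken — so the argument must be set up as an induction showing the state never leaves $\bigcap_p\mathcal{B}_p$, using the just-derived bound to re-justify the Lipschitz estimate at the next step. Making this invariance induction airtight — choosing $\delta$ uniformly so that $\mathcal{M}(\bar\omega(\delta))$ still sits strictly inside $\bigcap_p\accentset{\circ}{\mathcal{B}}_p$ before the bound is known to hold — is the main obstacle; the contraction-and-jump bookkeeping itself is routine once \eqref{eq:avg-dwell} is in hand.
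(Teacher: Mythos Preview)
Your plan is correct and mirrors the paper's argument closely: the paper likewise (i) upgrades each $V_p$ to an ISS-Lyapunov estimate $V_p(f_p(x,d))\le\lambda V_p(x)+\alpha_p(\|d\|)$ on $\mathcal{B}_p$ via Lipschitz continuity of $V_p\circ f_p$ (its Lemma~\ref{lem:ISS-Lyap}), (ii) invokes the dwell-time/jump bookkeeping you sketch---though it outsources that step to \cite[Corollary~2]{veer2018ultimate} rather than redoing it---and (iii) closes with exactly your set-containment argument, defining $\overline{\kappa}:=\min_{p}\min_{x\in\underline{\mathcal{B}}}V_p(x)>\mu(\kappa)^{\overline{N}_0}\omega(\kappa)$ and shrinking $\delta$ so that $\mathcal{M}(\bar\omega(\delta))\subset\bigcap_p\accentset{\circ}{\mathcal{B}}_p$, which resolves the invariance induction you flagged. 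The one point the paper handles more carefully than your sketch is the uniform Lipschitz constant for $V_p\circ f_p$ on $\mathcal{B}_p\times B_\delta(0)$: since $\mathcal{D}$ may be an infinite-dimensional Banach space, $\overline{B}_\delta(0)\subset\mathcal{D}$ need not be compact, so ``locally Lipschitz on a compact set $\Rightarrow$ Lipschitz'' does not apply directly and a separate finite-subcover argument in $x$ alone is used.
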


The proof of Theorem~\ref{thm:dist-agnostic} is detailed in the appendix.


\section{Adaptation of a 3D Limit-Cycle Biped}
\label{sec:example}

In this section we apply the proposed switching framework to adapt the walking gaits of a 3D limit-cycle biped to an externally applied force. 

\subsection{Task and Robot Model}

Our motivation stems from collaborative object transportation tasks, in which a leading co-worker---either a robot or a human---physically interacts with a biped to direct its motion. It is assumed that the leader's intention can be represented as a sufficiently smooth trajectory $p_{\rm L}(t)$. Although the biped does not know $p_{\rm L}(t)$ explicitly, it experiences an interaction force $F_{\rm e}(t)$ applied by the collaborator, which carries information about the intended trajectory. Following~\cite{rahman2002investigation, tsuji2004analysis}, to model this interaction the leader's intention $p_{\rm L}(t)$ is translated to the force $F_{\rm e}(t)$ using an impedance model 
\begin{equation}\nonumber
F_{\rm e}(t) = K_{\rm L}(p_{\rm L}(t)-p_{\rm E}(t)) + N_{\rm L}(\dot{p}_{\rm L}(t)-\dot{p}_{\rm E}(t)) \enspace, 
\end{equation}
where $p_{\rm E}$ is the point at which the force is applied and $K_{\rm L}$ and $N_{\rm L}$ are suitable stiffness and damping matrices, respectively; see \cite{motahar2015impedance, motahar2017steering, veer2017supervisory} for more details. 

The bipedal robot model employed here is shown in Fig.~\ref{fig:model} and is similar to models that have appeared in the literature~\cite{3D-robotica2012, motahar2017steering}; thus, the exposition below will be terse. The model features nine degrees of freedom, and its configuration can be described by the coordinates $q:=(q_1,q_2,...,q_9)$ as in Fig.~\ref{fig:model}. It is assumed that all degrees of freedom other than yaw $q_1$ and pitch $q_2$ are actuated.  
%
The walking cycle is composed of alternating sequences of single and double support phases. As in~\cite{3D-robotica2012}, we assume that the double support phases are instantaneous and can be modeled as an impact event based on the hypotheses listed in~\cite[Chapter 3]{westervelt2007feedback}. Defining $\hat{x}:=(q,\dot{q})$ as the state of the robot, walking can be represented as a system with impulse effects
\begin{align}
\Sigma:
\begin{cases}
~~\dot{\hat{x}} = f(\hat{x}) + g(\hat{x})u + g_{\rm e}(\hat{x}) F_{\rm e}, & \mathrm{if}~\hat{x}\not\in\mathcal{S} \\
~~\hat{x}^+ = \Delta(\hat{x}^-), & \mathrm{if}~\hat{x}^-\in\mathcal{S}
\end{cases} \enspace, \label{eq:SIE-robot}
\end{align}
where $u$ are the inputs, $(f, g, g_{\rm e})$ describe the swing phase dynamics in the presence of the external force $F_{\rm e}$. In~\eqref{eq:SIE-robot}, $\mathcal{S}$ represents the ground surface and $\Delta$ is a mapping taking the states $\hat{x}^-$ prior to impact to the states $\hat{x}^+$ right after impact. More details on the model can be found in~\cite{3D-robotica2012} with the difference due to the existence of the external force, which is taken into account as explained in~\cite{motahar2017steering}.   

\begin{figure}[t!]
\begin{centering}
\includegraphics[width=0.55\columnwidth]{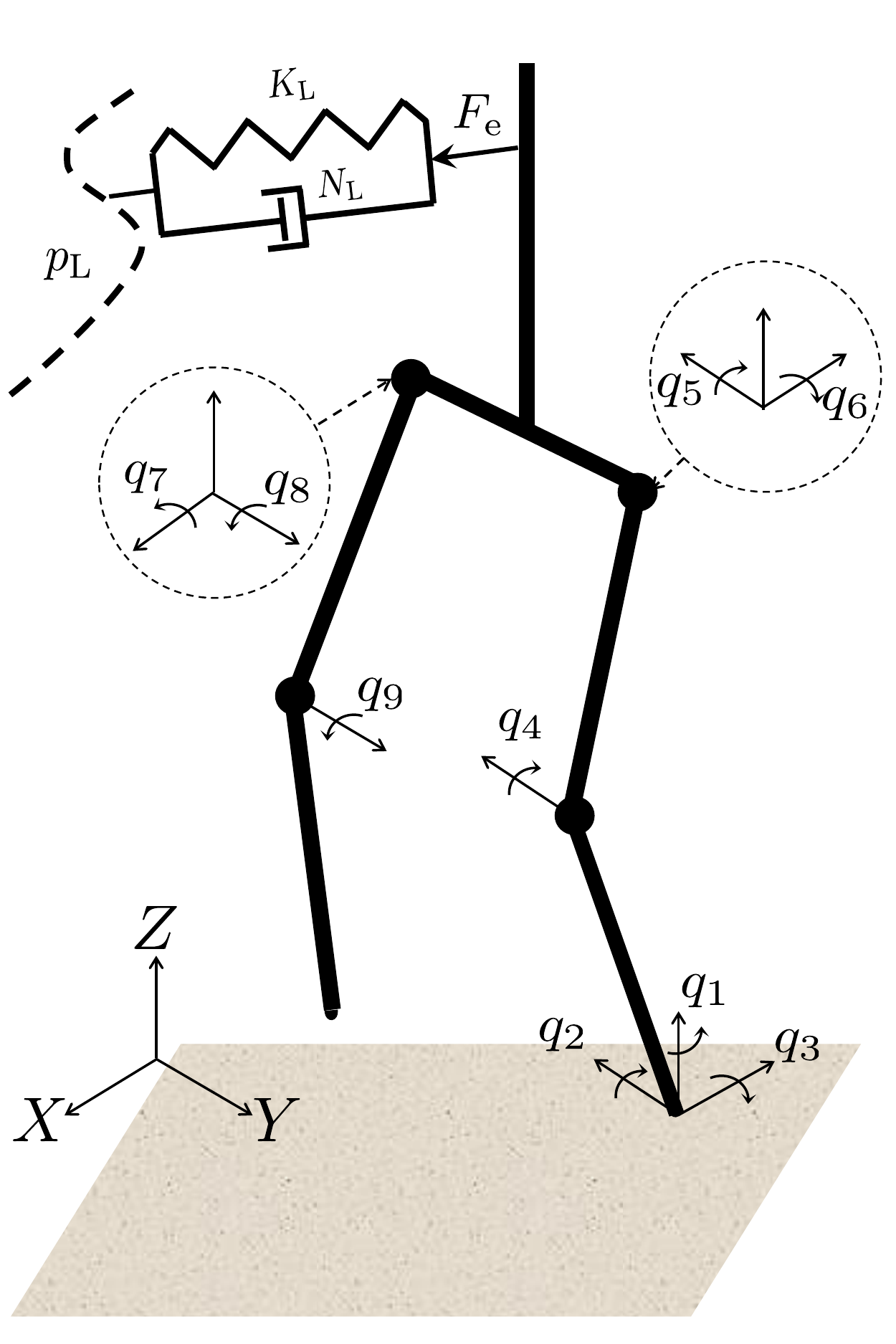} 
\par\end{centering}
\vspace{-0.1in}
\caption{Robot model with a choice of generalized coordinates when supported on left leg.}
\vspace{-0.3in}
\label{fig:model} 
\end{figure}

%

\subsection{Family of Controllers}
\label{subsec:control-fam}

We work with a finite family of feedback controllers $u=\Gamma_p(x, F_{\rm e})$, where $x$ includes all components of $\hat{x}$ except $q_1$, i.e. $\hat{x}:=(q_1,x)$. Each controller results in a limit cycle $\mathcal{O}_p$ that corresponds to a straight-line or a turning walking motion. The controllers are designed within the HZD framework as in \cite{motahar2017steering, 3D-robotica2012}, and they guarantee exponential stability of $\mathcal{O}_p$ in the absence of the external force. 
%
In the interest of space, we will only discuss some important properties of the controllers $\Gamma_p$; details can be found in \cite{motahar2017steering}. Associated to each $\Gamma_p$ is a zero dynamics surface $\mathcal{Z}_p$ which, for each $p\in\mathcal{P}$, has the following properties:
\begin{enumerate}[(i)]
\item $\mathcal{Z}_p$ is invariant under the biped's closed-loop dynamics despite the presence of $F_{\rm e}$; 
%
\item for each $\mathcal{Z}_p$, we have $\mathcal{S}\cap\mathcal{Z}_p = \mathcal{S}\cap\mathcal{Z}_q$ for all $p,q\in\mathcal{P}$. 
\end{enumerate}
A point of clarification is due here. The properties listed above are specific to the controllers used in this example; any other control method that generates exponentially stable walking limit cycles could be used without changing the procedure of Section~\ref{sec:switch-guarantees} for safe switching. However, these properties vastly simplify computation of the Lyapunov functions and the corresponding estimates of the BoAs involved in establishing the conditions of Section~\ref{sec:switch-guarantees} due to the associated dimensional reduction.

\subsection{Limit-Cycle Gait Primitives and Switching Among Them}

The behavior of the limit cycle walking motions $\mathcal{O}_p$ can be studied via the corresponding \emph{forced} Poincar\'e map~\cite{veer2017poincare}
\begin{equation}\label{eq:discrete-dyn-xhat}
\hat{x}_{k+1} = \hat{P}_p(\hat{x}_k, F_{{\rm e},k}) \enspace,
\end{equation}
where $F_{{\rm e}, k}$ is the externally applied force over one stride. Although the forced Poincar\'e map \eqref{eq:discrete-dyn-xhat} is 17-dimensional, thus making the estimation of the BoA very challenging, the controller properties listed in Section~\ref{subsec:control-fam} allow us to work with a drastically lower-dimensional system. Due to the invariance of the associated zero dynamics surfaces $\mathcal{Z}_p$, the state always returns to $\mathcal{S}\cap\mathcal{Z}_p\subset\mathbb{R}^3$ at the end of each step. Taking into account that rotations around the yaw axis do not alter the dynamics---i.e., the system is equivariant to $q_1$ as shown in~\cite{3D-robotica2012, motahar2016composing}---allows us to work with the two-dimensional restricted forced Poincar\'e map
\begin{align}
z_{k+1} & = \rho_p(z_k,F_{{\rm e},k}) \enspace, \label{eq:discrete-dyn-z}
\end{align}
where $z$ are suitable coordinates on $\mathcal{S}\cap\mathcal{Z}_p$. As a result, the limit cycle walking motions $\mathcal{O}_p$ can be represented as motion primitives of the form $\mathcal{G}_p:=\{\rho_p,z_p^*\}$ where $z_p^*$ is a fixed point of $\rho_p$. Switching among these primitives according to a signal $\sigma(k)$ can be described by the switched system
\begin{equation}\label{eq:switch-system-zd}
z_{k+1} = \rho_{\sigma(k)}(z_k,F_{{\rm e},k}) \enspace.
\end{equation}
It is worth reminding that the dimensional reduction afforded by HZD resulted in a 2-dimensional switched systems instead of the original 17-dimensional system. 

\subsection{Simulation Results}
\label{sec:sim-results}


Working within the family of controllers discussed in Section~\ref{subsec:control-fam}, we generate three gait primitives: $\mathcal{G}_0$ to turn clockwise by $30^{\circ}$, $\mathcal{G}_1$ to walk straight, and $\mathcal{G}_2$ to turn counter-clockwise by $30^{\circ}$. Using SoS programming, $\mathcal{B}_p$'s are obtained and plotted in Fig.~\ref{fig:BoA} as dashed ellipses; exact details of the SoS program can be found in \cite{motahar2016composing}. 
We choose $\kappa=0.002$ and compute upper bounds for $\mu(\kappa)$ and $\omega(\kappa)$ using \cite[Proposition~1]{veer2018ultimate}. Using these we obtain $\overline{\mathcal{M}}$ defined in \eqref{eq:feasible-0-inp} and it can be noted from Fig.~\ref{fig:BoA}, that this set lies within $\accentset{\circ}{\mathcal{B}}_0\cap\accentset{\circ}{\mathcal{B}}_1 \cap\accentset{\circ}{\mathcal{B}}_2$ for $\overline{N}_0=2$; further, computing \eqref{eq:avg-dwell} gives $\overline{N}_{\rm a}=0.99$. With this choice of $\overline{N}_0$ and $\overline{N}_{\rm a}$, arbitrary switching signals satisfy \eqref{eq:avg-dwell-time-def}. Hence, by Theorem~\ref{thm:dist-agnostic}, the biped can switch to a different primitive each stride without compromising its safety, despite the external force.

\begin{figure}[t]
\vspace{-0.05in}
\begin{centering}
\includegraphics[width=0.9\columnwidth]{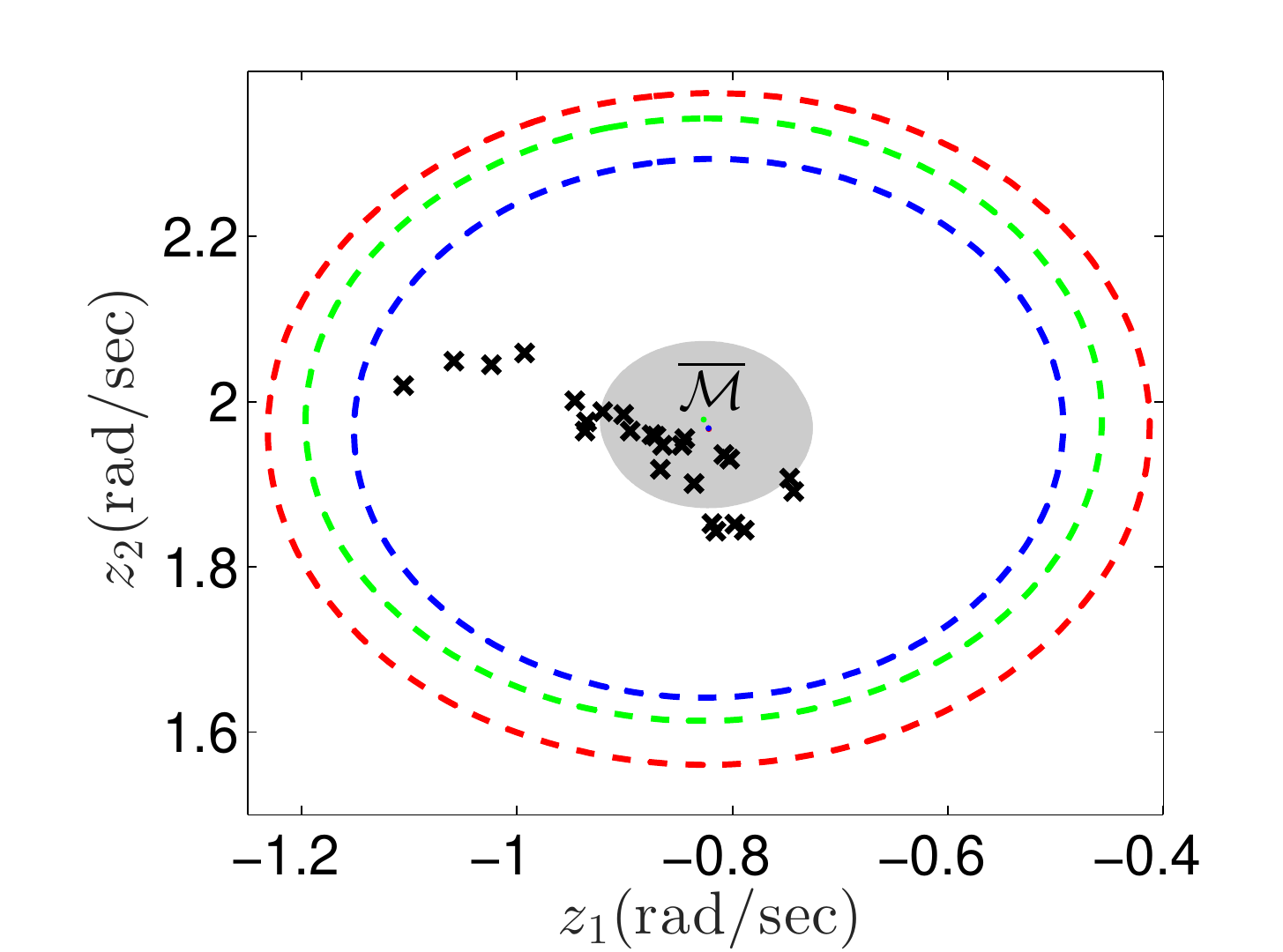}
\par\end{centering}
\vspace{-0.1in}
\caption{Estimates of the BoA for the primitives in $\mathbb{G}$ and verification of \eqref{eq:feasible-0-inp}. The BoA estimates $\mathcal{B}_0$, $\mathcal{B}_1$, and $\mathcal{B}_2$ are the dashed red, green, and blue ellipses, respectively. The grey region is $\overline{\mathcal{M}}$ defined in \eqref{eq:feasible-0-inp} for $\kappa=0.002$, $\overline{N}_0=2$. Black crosses are the solution of \eqref{eq:switch-system-zd} for the simulation in Fig.~\ref{fig:leader_follow}. }
\vspace{-0.25in}
\label{fig:BoA} 
\end{figure}

\begin{figure}[b]
\vspace{-0.25in}
\begin{centering}
\includegraphics[width=0.78\columnwidth]{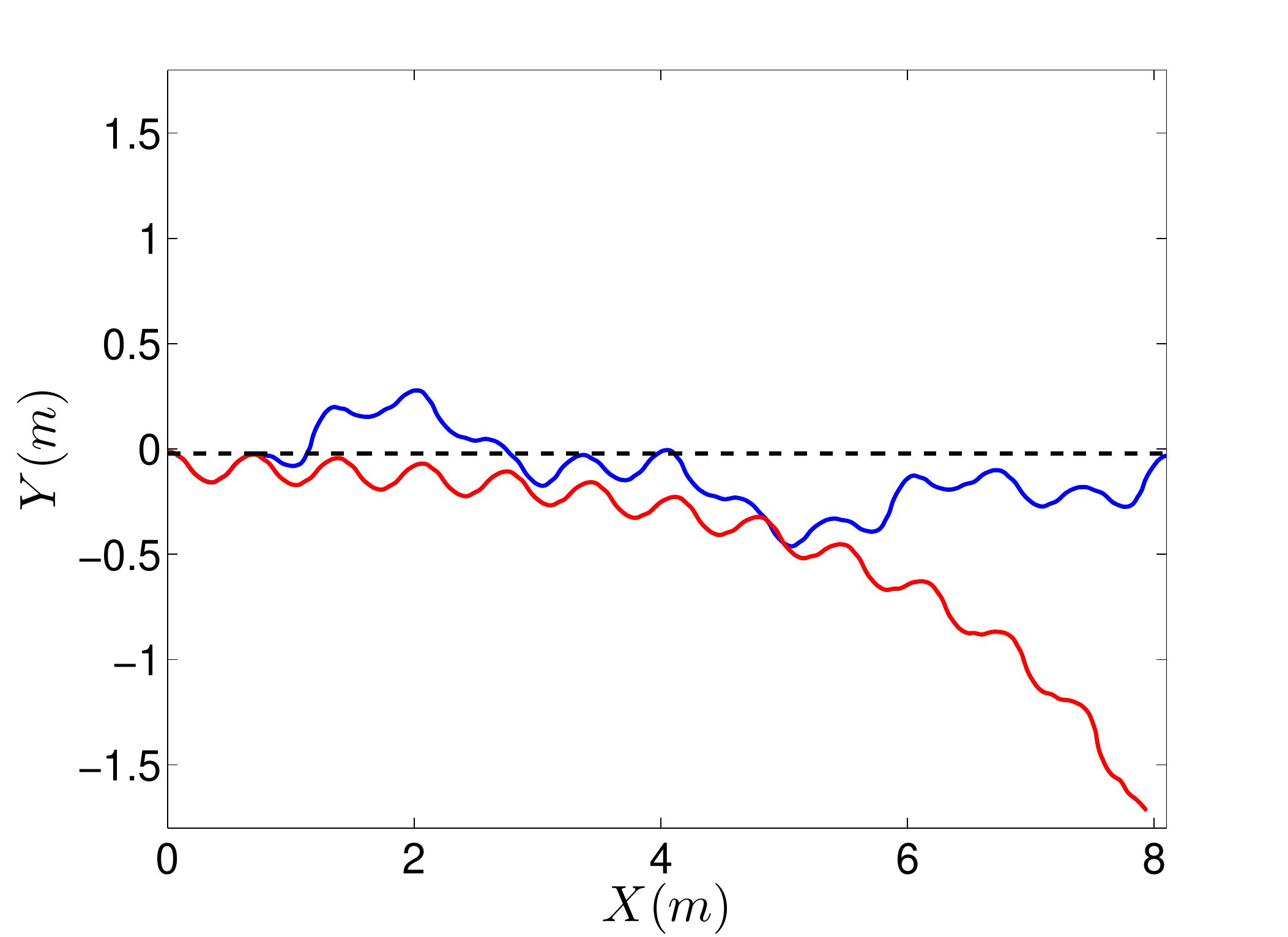} 
\par\end{centering}
\vspace{-0.15in}
\caption{Bipedal walker collaborating with a leader walking straight at 0.65 m/s. The dashed black line is the leader's intended trajectory. The trajectory of the biped's point-of-application of the force is denoted by red when only $\mathcal{G}_0$ is executed; and by blue when adaptive switching is implemented.}
\vspace{-0.1in}
\label{fig:biped_lead_straight} 
\end{figure}

Now we turn our attention towards the switching policy that adapts the biped's gait to the leader's intended trajectory $p_{\rm L}(t)$, which is not directly available to the biped, requiring the planner to harness the external force feedback as a cue for adaptation. Our switching policy estimates the ``average" heading direction $\Phi_k$ that the force is pointing towards over a stride, and then chooses the primitive that turns the biped towards this estimated heading. To compute $\Phi_k$, we integrate the force along the $X$ and $Y$ directions over a stride; see Fig.~\ref{fig:model} for the global coordinate frame. Let $t_0=0$ be the initial time and $t_k$ be the time at the end of the $k$-th stride. Then, over the $(k+1)$-th stride, the integral of the force components are
\begin{align}\nonumber
F_k^X:=\int_{t_k}^{t_{k+1}} F_{\rm e}^X(t)~dt, & & F_k^Y:=\int_{t_k}^{t_{k+1}} F_{\rm e}^Y(t)~dt \enspace,
\end{align}
which are used to compute the ``average" heading as $\Phi_k=\arctan(F_k^Y/F_k^X)$. The switching policy is chosen to be 
\begin{align}
\sigma(k+1)= \mathrm{sign}(\Phi_k)+1 \enspace, \label{eq:policy}
\end{align}
where the $\mathrm{sign}$ function returns -1, 0, 1 for negative, 0, and positive $\Phi_k$, respectively. It can be observed from the switching policy that there is a one-step time delay in response to the force, i.e., $\Phi_k$ is used to obtain $\sigma(k+1)$. 

We first test our planning framework for walking straight. It can be noted from Fig.~\ref{fig:biped_lead_straight} that with a single primitive $\mathcal{G}_0$ the biped drifts away from $p_{\rm L}(t)$. However, the switching framework is able to adapt the biped's gait to keep it within the vicinity of the leader's intended trajectory. Next, we simulate a more complex scenario shown in Fig.~\ref{fig:leader_follow} where $p_{\rm L}(t)$ is represented by the red line, along which the leader intends to move at a constant speed of 0.65 m/s. Following the switching signal generated by the supervisor, the biped is able to adapt to the leader's intended trajectory while maintaining its safety, as verfied by Fig.~\ref{fig:BoA} where the solution of the switched system \eqref{eq:switch-system-zd}, denoted by black crosses, lies within $\bigcap_{p\in\mathcal{P}} \mathcal{B}_p$, satisfying \eqref{eq:safety_criterion}. 
Lastly, note that the biped naturally adapts its speed to the external force without requiring the planner's intervention; see \cite{veer2015adaptation,motahar2017steering} for details. 


\begin{figure}[b]
\vspace{-0.2in}
\begin{centering}
\includegraphics[width=0.95\columnwidth]{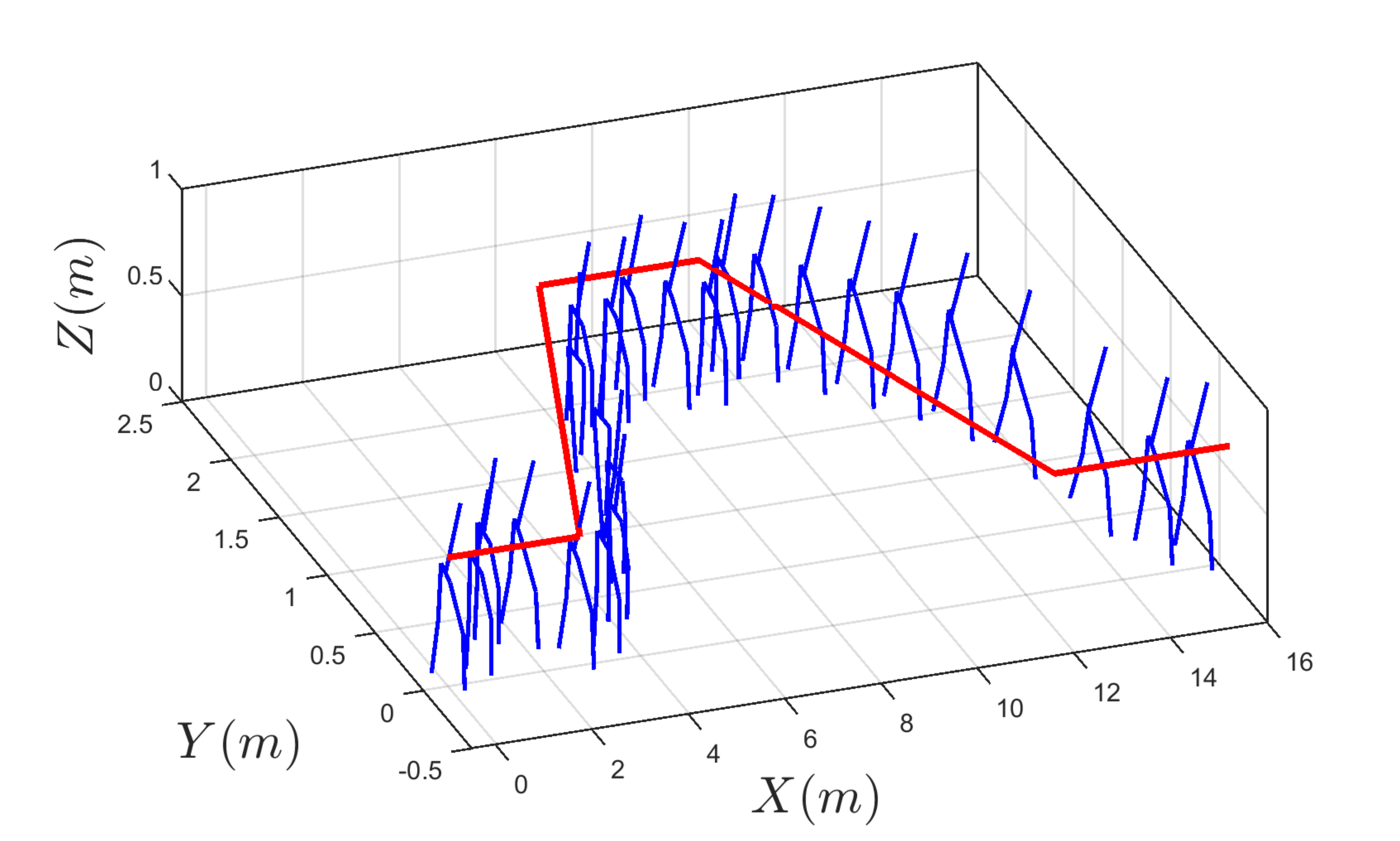} 
\par\end{centering}
\vspace{-0.15in}
\caption{Biped adapting to follow the leader's intended trajectory $p_{\rm L}(t)$ by switching among the primitives $\mathcal{G}_0.\mathcal{G}_1,\mathcal{G}_2$. The red line represents $p_{\rm L}(t)$ and the blue stick figures represent the biped at the end of alternate steps.}
\vspace{-0.1in}
\label{fig:leader_follow} 
\end{figure}

\section{Conclusion}
\label{sec:conclusion}

In this paper we proposed a general planning framework that facilitates adaptive planning in the face of uncertainty and external signals. Our framework views switching among externally excited dynamic primitives as a switched system with multiple equilibria under disturbances. We analyze this switched system to extract an average dwell-time bound that succinctly captures the dynamic limitations of the robot, resulting in motion plans consistent with the dynamics. The planning framework is particularized to a biped-leader collaborative task, where the biped's gait is adapted to follow the leader's intended trajectory, harnessing the interaction force as a command.

\appendix
For convenience, denote the space of uniformly bounded sequences in $\mathcal{D}$ by $l_\infty^{\mathcal{D}}$; and $\mathcal{B}_p$ as the $\overline{\kappa}_p>0$ sublevel set of $V_p$, i.e., $\mathcal{B}_p=\{x\in\mathbb{R}^n~|~V_p(x)\leq \overline{\kappa}_p\}$. 

Before proceeding to the proof of Theorem~\ref{thm:dist-agnostic}, we first establish the following lemma, which shows that a Lyapunov function on a compact set of the state space is also an ISS-Lyapunov function on the same set, provided, that the disturbances are sufficiently small. 
\begin{lemma}\label{lem:ISS-Lyap}
Let $V_p:\mathcal{X}_p \to \mathbb{R}_+$ be a Lyapunov function for $f_p(\cdot,0)$ that satisfies \eqref{eq:V-2-0-inp}. Then, there exist a\footnote{We assume that $\delta$ is the same for each $p\in\mathcal{P}$; if this is not the case we can choose $\delta:=\min_{p\in\mathcal{P}} \delta_p$ without loss of generality. } $\delta>0$ and $\alpha_p\in\mathcal{K}_\infty$, such that $V_p$ is an ISS-Lyapunov function, that satisfies for all $x\in\mathcal{B}_p$ and $d:=\{d_k\}_{k\in\mathbb{Z}_+}\in l_\infty^{\mathcal{D}}$ with $\|d\|_\infty<\delta$,
\begin{equation}\nonumber
V_p(f_p(x,d_k)) \leq \lambda V_p(x) + \alpha_p(\|d\|_\infty)
\end{equation}
where $0<\lambda<1$ is the same as in \eqref{eq:V-2-0-inp}.
\end{lemma}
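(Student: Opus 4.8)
The plan is to split the one-step increment of $V_p$ into the nominal contraction term, which is controlled directly by \eqref{eq:V-2-0-inp}, and a perturbation term, which is absorbed into $\alpha_p(\|d\|_\infty)$ using the local Lipschitz regularity of $V_p$ and $f_p$ on a compact set. Concretely, write
\[
V_p(f_p(x,d_k)) = V_p(f_p(x,0)) + \big[\,V_p(f_p(x,d_k)) - V_p(f_p(x,0))\,\big]\enspace,
\]
bound the first summand by $\lambda V_p(x)$ via \eqref{eq:V-2-0-inp}, and bound the bracketed term by a $\mathcal{K}_\infty$-function of $\|d\|_\infty$.

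First I would record that $\mathcal{B}_p$ is forward invariant under the undisturbed map: for $x\in\mathcal{B}_p$, \eqref{eq:V-2-0-inp} gives $V_p(f_p(x,0))\leq \lambda V_p(x) \leq \lambda\overline{\kappa}_p < \overline{\kappa}_p$, hence $f_p(\mathcal{B}_p,0)\subseteq \mathcal{B}_p$. Since $V_p$ is continuous and radially unbounded, $\mathcal{B}_p$ is compact; fix $\epsilon>0$ small enough that its closed $\epsilon$-neighborhood $\mathcal{B}_p^{\epsilon}$ is a compact subset of $\mathcal{X}_p$ (automatic when $\mathcal{X}_p=\mathbb{R}^n$). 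A finite-subcover argument applied to the assumed local Lipschitz continuity then furnishes a Lipschitz constant $L_V$ for $V_p$ on $\mathcal{B}_p^{\epsilon}$, together with a constant $L_f$ and a radius $\delta_0>0$ such that $\|f_p(x,e)-f_p(x,0)\| \leq L_f\|e\|_{\mathcal{D}}$ for all $x\in\mathcal{B}_p$ and all $e\in\mathcal{D}$ with $\|e\|_{\mathcal{D}}<\delta_0$; note compactness is invoked only in the state variable, so this goes through even when $\mathcal{D}$ is infinite-dimensional.

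Then I would set $\delta := \min\{\delta_0,\ \epsilon/L_f\}$. For any $d\in l_\infty^{\mathcal{D}}$ with $\|d\|_\infty<\delta$ and any $k$, one gets $\|f_p(x,d_k)-f_p(x,0)\| \leq L_f\|d_k\|_{\mathcal{D}} \leq L_f\|d\|_\infty < \epsilon$ for every $x\in\mathcal{B}_p$, so both $f_p(x,0)$ and $f_p(x,d_k)$ lie in $\mathcal{B}_p^{\epsilon}$, where $V_p$ is $L_V$-Lipschitz. Combining this with \eqref{eq:V-2-0-inp} yields
\[
V_p(f_p(x,d_k)) \leq \lambda V_p(x) + L_V L_f\,\|d\|_\infty\enspace,
\]
so $\alpha_p(s) := L_V L_f\, s$---linear, hence in $\mathcal{K}_\infty$---does the job; replacing $\delta$ by $\min_{p\in\mathcal{P}}\delta_p$ gives the uniform constant noted in the footnote.

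I expect the only genuine difficulty to be the bookkeeping around domains: one must ensure the disturbed iterate $f_p(x,d_k)$ never leaves the compact set on which $V_p$ has been shown to be Lipschitz, which is precisely what forces the two-stage choice of $\delta$ (first small enough for the Lipschitz estimate on $f_p$ to hold, then small enough to keep $f_p(x,d_k)$ inside $\mathcal{B}_p^{\epsilon}$). Everything else reduces to the triangle inequality.
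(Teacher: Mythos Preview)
Your argument is correct and rests on the same mechanism as the paper's---local Lipschitz regularity of $V_p$ and $f_p$ combined with a finite-subcover argument over the compact set $\mathcal{B}_p$ to get uniform constants---but your execution is more direct. The paper instead shows that the composition $g_p:=V_p\circ f_p$ is (jointly) Lipschitz on $\mathcal{B}_p\times B_\delta(0)$ and then appeals to \cite[Theorem~2]{veer2017poincare} to conclude; establishing that joint Lipschitz bound requires a two-case analysis (whether or not $x_1,x_2$ fall in a common covering ball) and an auxiliary uniform bound on $g_p$. By contrast, you only need Lipschitz continuity of $f_p$ in the disturbance argument, uniformly over $x\in\mathcal{B}_p$, together with Lipschitz continuity of $V_p$ on the enlarged set $\mathcal{B}_p^\epsilon$; this is strictly weaker than what the paper proves, yet suffices for the splitting $V_p(f_p(x,d_k))=V_p(f_p(x,0))+[\,\cdot\,]$ you employ. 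Your route is therefore self-contained (no external citation) and slightly more economical, at the cost of not producing the stronger joint Lipschitz statement for $g_p$ that the paper records. The only assumption you tacitly use beyond the lemma's hypotheses is that $V_p$ is locally Lipschitz; this is indeed part of the standing assumptions in Theorem~\ref{thm:dist-agnostic}, so you are on solid ground.
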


\begin{proof}
This proof builds on \cite[Theorem~2]{veer2017poincare}, from which it follows that a Lyapunov function $V_p$ is also an ISS-Lyapunov function in a set where $g_p:=V_p\circ f_p:\mathcal{X}_p\times\mathcal{D}\to\mathbb{R}_+$ has a uniform Lipschitz constant. Hence, we first establish this property in our region of interest.

\vspace{0.5mm}
\noindent \emph{Claim 1:} There exists a $\delta>0$ such that $g_p$ is Lipschitz for all $x\in\mathcal{B}_p$ and\footnote{Notation: We use $B_\delta(a)$ to denote an open-ball of radius $\delta$ centered at $a$. This notation can be used for open-balls in $\mathbb{R}^n$, as well as $\mathcal{D}$. It will be clear from context the space to which the ball belongs.} $d\in B_{\delta}(0)$.\\
The proof for Claim 1 is detailed after the proof of Theorem~\ref{thm:dist-agnostic}. It is worth noting that if $\mathcal{D}$ were a finite-dimensional Banach space, $\mathcal{B}_p\times\overline{B}_\delta(0)$ would be a compact set by Heine-Borel theorem, and then Claim 1 would follow from the fact that locally Lipschitz functions are Lipschitz on compact sets. However, as we allow for an infinite-dimensional Banach space, more technical care is required. 

With Claim 1, we can merely repeat the proof of \cite[Theorem~2]{veer2017poincare} to complete the proof of Lemma~\ref{lem:ISS-Lyap}.
\end{proof}


Now we are ready to present the proof of Theorem~\ref{thm:dist-agnostic}.

\begin{proof}\emph{[Theorem~\ref{thm:dist-agnostic}]}
The proof of this theorem utilizes \cite[Corollary~2]{veer2018ultimate} which shows that for switched systems with \emph{global} ISS subsystems, if the switching signal satisfies the average dwell-time bound \eqref{eq:avg-dwell}, then, there exists a non-empty compact set $\mathcal{C}\subset\mathcal{M}(\mu(\kappa)^{\overline{N}_0}\omega(\kappa))$ such that for all $k\in\mathbb{Z}_+$,
\begin{equation}\label{eq:cor-use}
x_0\in\mathcal{C} \implies x_k\in \mathcal{M}(\bar{\omega}(\|d\|_\infty)) \enspace.
\end{equation}
The proof of \cite[Corollary~2]{veer2018ultimate} explicitly characterizes $\mathcal{C}$ as $\mathcal{C}=\bigcap_{p\in\mathcal{P}}\mathcal{M}_p(\omega(\kappa))$. With this, and \eqref{eq:cor-use}, we can obtain \eqref{eq:target-set}. However, \cite[Corollary~2]{veer2018ultimate} requires global ISS-Lyapunov functions for each subsystem, but the $V_p$ from Lemma~\ref{lem:ISS-Lyap} are ISS-Lyapunov functions only for $x\in\mathcal{B}_p$ and $d\in\mathcal{D}$ with $\|d\|_\infty<\delta$. Therefore, to use this corollary we restrict the solutions of \eqref{eq:switched-system} to evolve in the domain $\bigcap_{p\in\mathcal{P}}\mathcal{B}_p$ for $d\in\mathcal{D}$ with $\|d\|_\infty<\delta$, where each ISS-Lyapunov function is valid. 

To ensure that the solutions stay within $\bigcap_{p\in\mathcal{P}}\mathcal{B}_p$, we claim that it is sufficient to show that for some $\delta>0$,
\begin{equation}\label{eq:desired-relation}
\mathcal{M}(\bar{\omega}(\delta)) \subset \bigcap_{p\in\mathcal{P}} \accentset{\circ}{\mathcal{B}}_p \enspace.
\end{equation}
By the monotonicity of the sublevel sets, if $\|d\|_\infty<\delta$, $\mathcal{M}(\bar{\omega}(\|d\|_\infty))\subset \mathcal{M}(\bar{\omega}(\delta))$, which on using in \eqref{eq:desired-relation} gives $\mathcal{M}(\bar{\omega}(\|d\|_\infty)) \subset \bigcap_{p\in\mathcal{P}} \accentset{\circ}{\mathcal{B}}_p$. Hence, by \eqref{eq:cor-use} the solutions would be restricted to $\bigcap_{p\in\mathcal{P}} \accentset{\circ}{\mathcal{B}}_p$ for all $k\in\mathbb{Z}_+$ and we can obtain \eqref{eq:target-set}. In what follows, we will show that there exists a $\delta>0$, such that for disturbances smaller than $\delta$, \eqref{eq:desired-relation} holds.

For the sake of convenience, let $\underline{\mathcal{B}}:=\bigcup_{p\in\mathcal{P}} \mathcal{B}_p\setminus \bigcap_{p\in\mathcal{P}} \accentset{\circ}{\mathcal{B}}_p$ which does not contain any $x\in\mathcal{M}(\mu(\kappa)^{\overline{N}_0}\omega(\kappa))$ as $\mathcal{M}(\mu(\kappa)^{\overline{N}_0}\omega(\kappa))$ is in the interior of $\bigcap_{p\in\mathcal{P}} \mathcal{B}_p$. Therefore, 
\begin{align}\label{eq:boundary-Vp}
\forall p\in\mathcal{P}, & ~~~ V_p(x)>\mu(\kappa)^{\overline{N}_0}\omega(\kappa), & \forall x\in  \underline{\mathcal{B}} \enspace.
\end{align}
Let $\overline{\kappa}$ be defined as
\begin{equation}\label{eq:kappa-min-def}
\overline{\kappa}:=\min_{p\in\mathcal{P}} \min_{x\in \underline{\mathcal{B}}} V_p(x) \enspace,
\end{equation}
which is well-defined because $ \underline{\mathcal{B}}$ is compact and $\mathcal{P}$ is finite.
From \eqref{eq:boundary-Vp}, it follows that $\overline{\kappa}>\mu(\kappa)^{\overline{N}_0}\omega(\kappa)$. Let $0< c < \overline{\kappa} - \mu(\kappa)^{\overline{N}_0}\omega(\kappa)$, and shrink $\delta$ if necessary to ensure $0<\delta < \alpha^{-1}(c)$. Then, for any $p\in\mathcal{P}$, and $x\in\mathcal{M}_p(\bar{\omega}(\delta))$, 
\begin{align}
V_p(x) & \leq \mu(\kappa)^{\overline{N}_0}\omega(\kappa) + \alpha(\delta) \nonumber\\
& < \mu(\kappa)^{\overline{N}_0}\omega(\kappa) + c < \overline{\kappa} \enspace. \label{eq:Vp<kappa-boundary}
\end{align}
Furthermore, with this choice of $\delta$,
\begin{equation}\label{eq:Mp-in-Bp}
\mathcal{M}_p(\bar{\omega}(\delta))\subset \mathcal{B}_p \enspace.
\end{equation}
To see this, let $x\in\mathcal{M}_p(\bar{\omega}(\delta))$, then\footnote{Reminder: $\mathcal{B}_p$ is the $\overline{\kappa}_p$ sublevel set of $V_p$.} $V_p(x)<\overline{\kappa}\leq \overline{\kappa}_p$ by \eqref{eq:Vp<kappa-boundary} and $\overline{\kappa}\leq \min_{p\in\mathcal{P}}\overline{\kappa}_p$. It can be observed that $\overline{\kappa}\leq \min_{p\in\mathcal{P}}\overline{\kappa}_p$ because if not, then by the definition of $\overline{\kappa}$ in \eqref{eq:kappa-min-def}, for each $p\in\mathcal{P}$ and $x\in\underline{\mathcal{B}}$, $V_p(x)\geq \overline{\kappa} >\min_{p\in\mathcal{P}}\overline{\kappa}_p$, implying that every point in $\underline{\mathcal{B}}$ is strictly ``outside" some $\mathcal{B}_p$, resulting in a contradiction with the definition of $\underline{\mathcal{B}}$ that contains elements of each $\mathcal{B}_p$. This follows from the fact that $\mathcal{B}_p$ is closed, hence its boundary $\partial\mathcal{B}_p$ is a subset of $\mathcal{B}_p$, but by the definition of the boundary of a set, $\partial \mathcal{B}_p$ is not in $\accentset{\circ}{\mathcal{B}}_p$, hence $\underline{\mathcal{B}}$ includes each $\partial \mathcal{B}_p\subset \mathcal{B}_p$.

Finally, we claim that with the given choice of $\delta$ as above, \eqref{eq:desired-relation} holds. To check this claim, assume that \eqref{eq:desired-relation} does not hold for this choice of $\delta$. Then, there must exist a $\hat{x}\in \mathcal{M}(\bar{\omega}(\delta))$ which lies in the complement of $\bigcap_{p\in\mathcal{P}} \accentset{\circ}{\mathcal{B}}_p$. There exists a $p\in\mathcal{P}$ for which $\hat{x}\in\mathcal{M}_p(\bar{\omega}(\delta))$, which by \eqref{eq:Mp-in-Bp} gives $\hat{x}\in\mathcal{B}_p$ but $\hat{x}\not\in\bigcap_{p\in\mathcal{P}} \accentset{\circ}{\mathcal{B}}_p$, further implying that $\hat{x}\in\underline{\mathcal{B}}$. Hence, by the definition of $\overline{\kappa}$ in \eqref{eq:kappa-min-def}, $V_p(\hat{x})\geq \overline{\kappa}$. On the other hand, as $\hat{x}\in\mathcal{M}_p(\bar{\omega}(\delta))$, by \eqref{eq:Vp<kappa-boundary} it follows that $V_p(\hat{x})< \overline{\kappa}$, leading to a contradiction with $V_p(\hat{x})\geq \overline{\kappa}$. Hence, there exists a $\delta>0$ for which \eqref{eq:desired-relation} holds, completing the proof of Theorem~\ref{thm:dist-agnostic}.
\end{proof}

\begin{proof}\emph{[Claim 1 in Lemma~\ref{lem:ISS-Lyap}]}
As $V_p$ and $f_p$ are locally Lipschitz in their arguments, their composition $g_p:=V_p\circ f_p$ is locally Lipschitz as well. Hence, for any $(x,0)\in \mathcal{B}_p \times \mathcal{D}$, there exists a $\delta_x>0$ and $L_x>0$ such that $\|g_p(x_1,d_1)-g_p(x_2,d_2)\|\leq L_x \| (x_1-x_2,d_1-d_2)\| $, for any $x_1,x_2 \in B_{\delta_x}(x)$ and $d_1,d_2 \in B_{\delta_x}(0)$. Construct an open cover $\bigcup_{x\in\mathcal{B}_p} B_{\delta_x/2}(x)$ of $\mathcal{B}_p$ which is compact, hence there exists $\hat{x}_1,~\hat{x}_2,\cdots,\hat{x}_N$ such that $\mathcal{B}_p \subset \bigcup_{i=1}^n B_{\delta_i/2}(\hat{x}_i)$ where $\delta_i:=\delta_{\hat{x}_i}$ and define $\delta:=\min\{\delta_1/2,\cdots,\delta_N/2\}$.

Consider $x_1,x_2\in\mathcal{B}_p$ and $d_1,d_2\in B_\delta(0)$. Then, the following two cases arise.

\vspace{1mm}
\noindent \emph{Case (a):} There exists an $i\in\{1,\cdots,N\}$ such that $x_1,x_2\in B_{\delta_i}(\hat{x}_i)$. \\
As $d_1,d_2\in B_\delta(0)\subset B_{\delta_i/2}(0)$, and by the assumption of this case $x_1,x_2\in B_{\delta_i}(\hat{x}_i)$, we can use the Lipschitz continuity of $g_p$ in the $\delta_i$ neighborhood of $(\hat{x}_i,0)$ to obtain $\|g_p(x_1,d_1)-g_p(x_2,d_2)\|\leq L_i \| (x_1-x_2,d_1-d_2)\| $ where $L_i:=L_{\hat{x}_i}$. Define $\hat{L}:=\max\{L_1,\cdots,L_N \}$, then we can express the Lipschitz bound as
\begin{equation}\label{eq:lipschitz-case-a}
\|g_p(x_1,d_1)-g_p(x_2,d_2)\|\leq \hat{L} \| (x_1-x_2,d_1-d_2)\| \enspace.
\end{equation}

\vspace{1mm}
\noindent \emph{Case (b):} There does not exist any $i\in\{1,\cdots,N\}$ such that $x_1,x_2\in B_{\delta_i}(\hat{x}_i)$. \\
To obtain the Lipschitz bound in this case we first need to establish uniform boundedness of $g_p$ over $\mathcal{B}_p\times B_\delta(0)\subset\mathcal{X}_p\times\mathcal{D}$. Note that $g_p(\cdot,0):\mathcal{X}_p\to\mathcal{X}_p$ is Lipchitz on the compact set $\mathcal{B}_p$ as it is locally Lipshcitz in its arguments. Hence, there exists a $\tilde{L}>0$ such that $\|g_p(y_1,0)-g_p(y_2,0)\|\leq \tilde{L} \|y_1-y_2\|$ for any $y_1,y_2\in \mathcal{B}_p$. Further, using the boundedness (compactness) of $\mathcal{B}_p \subset \mathbb{R}^n$, there exists a $r>0$ such that $\|y_1-y_2\|\leq r$ for any $y_1,y_2\in\mathcal{B}_p$. As $\mathcal{B}_p \subset \bigcup_{i=1}^n B_{\delta_i/2}(\hat{x}_i)$, there exist $\hat{x}_m$ and $\hat{x}_n$ such that $\|x_1-\hat{x}_n\|<\delta_n/2$ and $\|x_2-\hat{x}_m\|<\delta_m/2$. Then, 
\begin{align}
& \|g_p(x_1,d_1)- g_p(x_2,d_2)\| \nonumber \\
& = \|g_p(x_1,d_1)-g_p(\hat{x}_n,0) + g_p(\hat{x}_n,0) - g_p(\hat{x}_m,0) \nonumber \\
& ~~~ + g_p(\hat{x}_m,0) - g_p(x_2,d_2) \| \nonumber \\
& \leq \|g_p(x_1,d_1)-g_p(\hat{x}_n,0)\| + \|g_p(\hat{x}_n,0) - g_p(\hat{x}_m,0)\| \nonumber \\
& ~~~ + \|g_p(\hat{x}_m,0) - g_p(x_2,d_2) \| \nonumber \\
& \leq L_n \big( \|x_1-\hat{x}_n\| + \|d_1\| \big) + \tilde{L} \|\hat{x}_n-\hat{x}_m\| \nonumber \\
& ~~~ + L_m \big( \|x_2-\hat{x}_m\| + \|d_2\| \big) \nonumber \\
& \leq 2\hat{L} \big( r+\delta \big) + \tilde{L}r =: M \enspace. \label{eq:P-uniform-bound}
\end{align}
Also, it can be noted that $\|x_1-x_2\|\geq \delta$ which can be shown by the way of contradiction. Suppose $\|x_1-x_2\| < \delta$. Let $\hat{x}_n$ be such that $\|x_1-\hat{x}_n\|<\delta_n/2$ which exists because $\mathcal{B}_p \subset \bigcup_{i=1}^n B_{\delta_i/2}(\hat{x}_i)$. Then, adding and subtracting this $\hat{x}_n$ in $\|x_1-x_2\|$, and using reverse triangle inequality gives
\begin{align}
\|x_2-\hat{x}_n\|-\|x_1-\hat{x}_n\| & \leq\|x_1-\hat{x}_n+\hat{x}_n-x_2\| <\delta  \nonumber 
\end{align}
which leads to $\|x_2-\hat{x}_n\| < \delta + \|x_1-\hat{x}_n\| < \delta_n/2 + \delta_n/2=\delta_n$ implying that $x_2\in B_{\delta_n}(\hat{x}_n)$, which along with the fact that $x_1\in B_{\delta_n}(\hat{x}_n)$ leads to a contradiction with the assumption of Case (b). Hence, $\|x_1-x_2\|\geq \delta$ which is used in \eqref{eq:P-uniform-bound} to obtain
\begin{align}
& \|g_p(x_1,d_1)- g_p(x_2,d_2)\| \nonumber \\
& \leq M \leq \frac{M}{\delta}\|x_1-x_2\| \leq \frac{M}{\delta} \|(x_1-x_2,d_1-d_2)\| \enspace. \label{eq:lipschitz-case-b}
\end{align}
With the bounds \eqref{eq:lipschitz-case-a} and \eqref{eq:lipschitz-case-b} in Case (a) and (b), respectively, let $L:=\max\{\hat{L},M/\delta\}$ to obtain
\begin{equation}\nonumber
\|g_p(x_1,d_1)- g_p(x_2,d_2)\| \leq L \|(x_1-x_2,d_1-d_2)\| \enspace,
\end{equation}
for any $x_1,x_2\in\mathcal{B}_p$ and $d_1,d_2\in B_\delta(0)$.
\end{proof}

\bibliographystyle{IEEEtran}
\bibliography{bib_lib}

\end{document}